\pgfplotsset{compat=1.18}   %
\newcommand{\scalebarimg}[4]{%
    \begin{tikzpicture}[every node/.style={inner sep=0,outer sep=0}]%
        \draw node[name=micrograph] {\includegraphics[width=#2]{#1}};%
        \draw (micrograph.north west) node[anchor=north west,yshift=-1,#4]{\small{#3}};%
    \end{tikzpicture}%
}
\newlength{\mycaption}
\newlength{\mySepBetweenFigAndCap}
\newlength{\mySepBetweenTopAndFig}
\newcommand{\ie}{i.e.\ }
\newcommand{\eg}{e.g.\ }
\newcommand{\myvec}[1]{\mathbf{#1}}     %
\newcommand{\bx}{\bm{\xi}}              %
\newcommand{\tbx}{\tilde{\bm{\xi}\vphantom{\bm{\xi}}}}     %
\newtheorem{thm}{Theorem}
\newtheorem{prop}{Proposition}
\theoremstyle{definition}
\newcommand{\IEEEacceptance}{
    \begin{tikzpicture}[overlay, remember picture]
        \path (current page.north east) ++(-2.1,-0.2) node[below left] {
            This paper has been accepted for publication in the 2024 IEEE International Conference on Robotics and Automation.
        };
    \end{tikzpicture}
}
\newcommand\copyrighttext{%
  \footnotesize \textcopyright 2024 IEEE. Personal use of this material is permitted.
  Permission from IEEE must be obtained for all other uses, in any current or future 
  media, including reprinting/republishing this material for advertising or promotional 
  purposes, creating new collective works, for resale or redistribution to servers or 
  lists, or reuse of any copyrighted component of this work in other works.}
\newcommand\copyrightnotice{%
\begin{tikzpicture}[remember picture,overlay]
\node[anchor=south,yshift=10pt] at (current page.south) {\fbox{\parbox{\dimexpr\textwidth-\fboxsep-\fboxrule\relax}{\copyrighttext}}};
\end{tikzpicture}%
}
\title{\LARGE \bf
Learning Barrier-Certified Polynomial Dynamical Systems for Obstacle Avoidance with Robots
}
\author{%
    Martin Schonger$^{1,\dagger}$, %
    Hugo T. M. Kussaba$^{1,\dagger}$, %
    Lingyun Chen$^{1}$, %
    Luis Figueredo$^{2}$,\\%
    Abdalla Swikir$^{1}$,    
    Aude Billard$^{3}$, and %
    Sami Haddadin$^{1}$%
\thanks{This work was funded by the European Union's Horizon 2020 program (grant agreement No. 101070596 - euROBIN and Marie Skłodowska-Curie grant agreement no. 899987), by the German Research Foundation (DFG) as part of Germany’s Excellence Strategy, EXC 2050/1, Project ID 390696704 – Cluster of Excellence ``Centre for Tactile Internet with Human-in-the-Loop'' (CeTI) of Technische Universität Dresden, and by the Bavarian State Ministry for Economic Affairs, Regional Development and Energy (StMWi) (the Lighthouse Initiative KI.FABRIK Bayern Phase 1: Aufbau Infrastruktur).
The authors would also like to thank Katharina Bieker, Philipp Scholl, and Bachir El Khadir for insightful discussions %
}%
\thanks{$^{1}$Munich Institute of Robotics and Machine Intelligence (MIRMI), Technical University of Munich (TUM), Germany. Abdalla Swikir is also with Omar Al-Mukhtar University (OMU), Albaida, Libya. %
}%
\thanks{$^{2}$School of Computer Science, University of Nottingham, UK. Luis Figueredo is also an Associated Fellow at the MIRMI, TUM.}%
\thanks{$^{3}$Learning Algorithms and Systems Laboratory, EPFL, Switzerland.}%
\thanks{$^{\dagger}$The first two authors contributed equally to this work.}%
}
\begin{document}
	
\maketitle
\IEEEacceptance
\copyrightnotice
\thispagestyle{plain}
\pagestyle{plain}

\begin{abstract}
Established techniques that enable robots to learn from demonstrations are based on learning a stable dynamical system (DS).
To increase the robots' resilience to perturbations during tasks that involve static obstacle avoidance, we propose incorporating barrier certificates into an optimization problem to learn a stable and barrier-certified DS. 
Such optimization problem can be very complex or extremely conservative when the traditional linear parameter-varying formulation is used. Thus, different from previous approaches in the literature, we propose to use polynomial representations for DSs, 
which yields an optimization problem that can be tackled by sum-of-squares techniques. 
Finally, our approach can handle obstacle shapes that fall outside the scope of assumptions typically found in the literature concerning obstacle avoidance within the DS learning framework. Supplementary material can be found at the project webpage: \url{https://martinschonger.github.io/abc-ds}
\end{abstract}

\section{Introduction}

The future of robotics will be challenged with novel and complex tasks, involving close proximity to people.
In particular, it will be essential to attain two key objectives. Firstly, since programming such complex tasks requires a high level of expertise and time, enabling the end-user to teach tasks along with preferences of how to perform those tasks is critical. 
Secondly, robots should be able to navigate around obstacles and avoid collisions with objects to perform their tasks efficiently and safely. 

One way to accomplish the first objective is by the \textit{Learning from Demonstration} (LfD) paradigm, that enables robots to learn a task by generalizing from  demonstrations, rather than simply recording and replaying~\cite{BillardCalinonDilmannSchall2008}. The fundamental principle of this paradigm is that it allows end-users to teach robots how to perform new tasks by providing them with examples of a task being performed, eliminating the need for coding on their part. 
An established method to implement LfD in robots is by encoding demonstrations in a stable \textit{dynamical system} (DS)~\cite{BillardMirrazaviFigueroa2022book}. More precisely, the position and velocity of the end effector (or of the joints) are recorded for each demonstration, and optimization algorithms are used to find a globally stable dynamical system which reproduces these demonstrations as closely as possible.

\begin{figure}[!t]%
    \vspace*{0.5em}%
    \centering%
    \includegraphics[width=0.85\columnwidth]{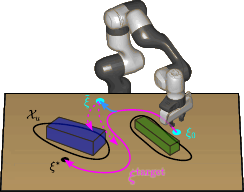}%
    \vspace*{\mySepBetweenFigAndCap}%
    \caption{%
        In a pick-and-place task a robot can be taught how to move from $\bx_0$ to $\bx^*$ by encoding this trajectory in a dynamical system (DS). While the reference trajectory avoids the two obstacles, enclosed by the unsafe set $\mathcal{X}_u$, a disturbance (blue) can push the robot to a state $\tilde{\bx}$ where there is no reference velocity available.
        Nonetheless, it is crucial that the trajectory starting in $\tilde{\bx}$ also does not collide with any obstacle, \ie does not enter $\mathcal{X}_u$.
        For example, the left dashed partial trajectory is unsafe, whereas the right one is safe (up to where it is shown).
        It is desired from the DS to generate safe trajectories for regions of the state space that go beyond the reference trajectories.%
    }%
    \label{fig:visual_abstract}%
    \vspace*{\mycaption}%
\end{figure}

Concurrently, as introduced by the second objective, it is important that robots learn how to realize these demonstrated tasks while also avoiding obstacles.
Encoding demonstrations in a globally stable dynamical system aims to accurately reproduce these for the robot to follow while ensuring that it will still converge towards the desired equilibrium point even if perturbations occur along the trajectory~\cite{KhansariZadehBillard2011LearningStableNonlinear, KronanderBillard2016PassiveInteractionControl,GuptaAradhanaBillard2022LearningHighDim}.
Note that the trajectory followed after a perturbation is not necessarily close to the reference trajectories. This becomes particularly problematic if this new trajectory encounters an obstacle (as illustrated in Fig.~\ref{fig:visual_abstract}). While methods have been proposed to make solutions tend toward the reference trajectory in certain parts of the state space~\cite{FigueroaBillard2022Locallyactiveglobally}, these do not take into account obstacles directly in the learning formulation.

Therefore, when dealing with scenarios involving obstacles, additional methods or techniques should be considered to address the challenges arising from perturbations and to ensure safe obstacle avoidance during the robot's motion.
Considering this, \cite{khansari2012dynamical, huber2019avoidance, huber2022avoiding ,huber2023avoidance}~proposed techniques to modulate DSs to deal with dynamic obstacles.
These techniques are designed to achieve real-time performance. However, they involve altering the original DS, which can potentially result in deviations from the user's initial demonstration.
Many scenarios, however, only require dealing with static obstacles---which allows using offline optimization techniques that \emph{simultaneously} learn a DS encoding the user's demonstrations, and a certificate that ensures that a specific set of trajectories of this DS will avoid user-defined obstacles. 
Since the certified DS results from a single-step optimization problem, it is expected that this DS will better preserve the user's demonstrations compared to two-step methods that first optimize the DS for the user demonstrations and then modify it for avoiding a static obstacle, without taking into account the cost function of the initial step.

One method for incorporating obstacle avoidance as a constraint in the DS optimization problem is by utilizing barrier certificates~\cite{Prajna2004safety}. However, naively combining barrier certificates with the usual Linear Parameter-Varying DS (LPV-DS) formulation~\cite{KhansariZadehBillard2011LearningStableNonlinear, FigueroaBillard2018PhysicalBayesianDS,FigueroaBillard2022Locallyactiveglobally} results in complex optimization problems, even when dealing with simple obstacles. 
Thus, instead of using the typical LPV-DS formulation, we propose to use a polynomial representation for DSs: this approach yields an optimization problem that can be tackled by sum-of-squares (SOS) techniques, and also facilitates the modelling of complex obstacles. 
In particular, this approach enables dealing with obstacles that violate the necessary assumptions of all algorithms previously proposed in the literature on obstacle-avoidance within the DS framework. For instance, \cite{huber2022avoiding}~assumes that individual obstacles must be star-shaped, while the more recent work~\cite{huber2023avoidance} requires that obstacles do not have any holes.
Our method, on the other hand, only requires a basic semi-algebraic description of the obstacles (which can be systematically generated~\cite{dabbene2017simple}). Notably, our method can be used with obstacles that are non-star-shaped (\eg Fig.~\ref{fig:abcds_ushape}) and even with shapes that have holes.

In summary, in this paper we propose using polynomial dynamical systems as a way to combine and synergize the methodology of learning stable dynamical systems with barrier certificates. We call our algorithm \emph{ABC-DS}, which stands for \emph{obstacle Avoidance with Barrier-Certified polynomial Dynamical Systems}, and
to demonstrate its effectiveness, we conducted numerical simulations and experiments. Finally, the source code for this research is publicly accessible.\footnote{%
Available at \href{https://github.com/martinschonger/abc-ds}{https://github.com/martinschonger/abc-ds}}

\section{Preliminaries}\label{sec:preliminaries}

We use the following notation: lowercase bold letters represent vectors in $\mathbb{R}^n$, \eg $\myvec{x}$, uppercase bold letters represent matrices in $\mathbb{R}^{n \times m}$, \eg $\myvec{A}$, and $\| \myvec{x} \|$ denotes the 2-norm of $\myvec{x}$. 
Given a differentiable function ${f\colon \mathbb{R}^n \to \mathbb{R}^m}$, ${\nabla f}$ denotes the gradient of $f$, \ie ${\nabla f(\myvec{x})=(\partial f/\partial x_1 (\myvec{x}), \ldots, \partial f/\partial x_n(\myvec{x}))^\mathsf{T}}$.
Given a polynomial $f$, $\deg(f)$ denotes the degree of $f$. Lastly, ${f^{-1}(0)}$ denotes the $0$-level set of a function ${f:\mathbb{R}^n \to \mathbb{R}}$, that is, ${f^{-1}(0)=\{\myvec{x}\in\mathbb{R}^n : f(\myvec{x})=0\}}$.

\subsection{Learning stable dynamical systems}

Learning a DS with a globally asymptotically stable equilibrium point in ${\bx^{\star}\in\mathbb{R}^n}$ can be expressed as an optimization problem whose variables are the dynamics\footnote{Hereafter we assume that ${f\colon\mathbb{R}^n \to \mathbb{R}^n}$ is such that the DS ${\dot{\bx} = f(\bx)}$ has a unique solution for each initial condition ${\bx(0) \in \mathbb{R}^n}$.} ${f\colon\mathbb{R}^n\to\mathbb{R}^n}$ and a radially unbounded Lyapunov function ${V\colon\mathbb{R}^n\to\mathbb{R}}$. 
More precisely, given reference data in the form of $N$ tuples of position and velocity, ${(\bx^{(i)}_{\mathrm{ref}}, \dot{\bx}^{(i)}_{\mathrm{ref}})}$, ${i=1,\ldots,N}$, one needs to solve the following optimization problem (cf.~\cite{KhansariZadehBillard2011LearningStableNonlinear}):\footnote{%
We assume that the sequence $(\bx^{(i)}_{\mathrm{ref}})_{i=1}^{N}$ is made of distinct elements, and that ${(\bx^{(N)}_{\mathrm{ref}}, \dot{\bx}^{(N)}_{\mathrm{ref}}) = ({\bx^{\star}}, \myvec{0})}$. Moreover, hereafter it is assumed that ${\bx^{\star} = \myvec{0}}$ without loss of generality since one can use the reference data ${(\bx^{(i)}_{\mathrm{ref}} - {\bx^{\star}}, \dot{\bx}^{(i)}_{\mathrm{ref}})_{i=1}^N}$ instead of the original data, and translate the obtained DS in such way that its equilibrium point is ${\bx^{\star}}$.
}
\begin{subequations}\label{eq:DS_opt_problem}
\begin{alignat}{2}
&\!\min_{f}  &\quad& {\sum_{i=1}^{N}\nolimits \| \dot{\bx}^{(i)}_{\mathrm{ref}} - f(\bx^{(i)}_{\mathrm{ref}}) \|^2} \label{eq:DS_opt_problem_objective}\\
&\text{\textup{s.t.}}  &&  f(\myvec{0}) \, = 0,\label{eq:DS_opt_problem_constraints_1}\\
&  &&  V(\myvec{0}) = 0, \ V(\bx) > 0 %
    \text{ for all } \bx \in \mathbb{R}^n {\setminus} \{ \myvec{0} \},\label{eq:DS_opt_problem_constraints_2}\\
&  &&  \dot{V}(\myvec{0}) = 0, \ \dot{V}(\bx) < 0 %
    \text{ for all } \bx \in \mathbb{R}^n {\setminus} \{ \myvec{0} \}.\label{eq:DS_opt_problem_constraints_3}
\end{alignat}
\end{subequations}
Finding $f(\bx)$ and $V(\bx)$ for each ${\bx\in\mathbb{R}^{n}}$ is an infinite-dimensional optimization problem and, as such, not computationally tractable unless
a suitable parameterization of $f$ and $V$ is chosen such that
\eqref{eq:DS_opt_problem} becomes a finite-dimensional approximation.
In particular, previous approaches in the literature propose \textit{linear parameter-varying} (LPV) dynamics $f$:
\begin{equation}\label{eq:func_parameter_varying_grad_system}
    f(\bx) = \sum_{k=1}^K\nolimits \gamma_k (\bx) \, \myvec{A}_k \, \bx,
\end{equation}
where ${\myvec{A}_k\in\mathbb{R}^{n \times n}}$ and ${\gamma_k\colon \mathbb{R}^n\to\mathbb{R}}$ is the $K$-component Gaussian mixture model (GMM) defined as 
\begin{equation}\label{eq:GMM_gamma}
\gamma_k(\myvec{x}) \coloneqq \frac{\pi_k \, p(\myvec{x}|k)}{\sum_j \pi_j \, p(\myvec{x}|j)},    
\end{equation}
with ${\pi_k\ge 0}$ being the mixing weights satisfying ${\sum_{k=1}^{K} \pi_k = 1}$, and %
${p(\myvec{x}|k)}$, ${k=1,\ldots,K}$, being the probability density function of a multivariate normal distribution. %
The DS corresponding to \eqref{eq:func_parameter_varying_grad_system} is given by
\begin{equation}\label{eq:LPV_DS}
    \dot{\bx}(t) = \sum_{k=1}^K\nolimits \gamma_k(\bx) \, \myvec{A}_k \, \bx,
\end{equation}
and the stability of the origin of \eqref{eq:LPV_DS} is certified by either (i) using the Lyapunov function ${V(\bx) = \|\bx\|^2}$, as in~\cite{KhansariZadehBillard2011LearningStableNonlinear}; or (ii) searching for a generic quadratic Lyapunov function, \ie ${V(\bx) = \bx^\mathsf{T} \myvec{P} \, \bx}$, where ${\myvec{P}\in\mathbb{R}^{n \times n}}$ is positive-definite, as in~\cite{FigueroaBillard2018PhysicalBayesianDS}.

\vspace{0.1em}

\subsection{Barrier certificates}
Barrier certificates were introduced in~\cite{Prajna2004safety} with the purpose of verifying the safety of control systems. 
Given an \textit{initial set} ${\mathcal{X}_0 \subset \mathbb{R}^n}$ and an \textit{unsafe set} ${\mathcal{X}_u \subset \mathbb{R}^n}$,
the DS is \emph{safe} if ${\bx_{0} \in \mathcal{X}_0}$ implies that the solution of the initial value problem ${\vphantom{\dot{\dot{\dot{\dot{\dot{m}}}}}}\dot{\tbx}(t) = f(\tbx(t))}$ with ${\tbx(0) = \bx_{0}}$
satisfies ${\tbx(t)\not\in\mathcal{X}_u}$ for all ${t \ge 0}$ (see Fig.~\ref{fig:robot_barrier_detailed}).

To verify if a given system is safe, one can compute a barrier certificate. 
Similar to Lyapunov functions, barrier certificates do not necessitate the explicit computation of system trajectories, offering a convenient way of integrating safety constraints as additional constraints in optimization problems.
The next theorem is adapted from~\cite{prajna2007framework} and shows how to employ barrier certificates:\footnote{%
Instead of checking a strict inequality for the term $\dot{B}$ over the 0-level set of $B$ as in Proposition~3 of~\cite{prajna2007framework}, we propose to check a non-strict inequality over a larger set containing the 0-level set of $B$.
While Theorem~1 of the previous work~\cite{Prajna2004safety} proposed checking this non-strict inequality only over the 0-level set of $B$, a variation of the counterexample in~\cite{she2015safety} shows that~\cite[Theorem 1]{Prajna2004safety} is incorrect.
}
\begin{thm}\label{thm:barrier_fcn}
Let ${\mathcal{X}_0, \mathcal{X}_u \subset \mathbb{R}^n}$, and ${\varepsilon_1 > 0}$.
If there exists a continuously differentiable function ${B\colon \mathbb{R}^n \to \mathbb{R}}$ such that
\begin{subequations}\label{eq:B_constraint}
\begin{alignat}{2}
& B(\bx) \le 0,\ && \text{ for all } \bx \in \mathcal{X}_0,\label{eq:B_constraint_1}\\
& B(\bx) > 0,\ && \text{ for all } \bx \in \mathcal{X}_u,\label{eq:B_constraint_2}\\
& {\nabla{B}(\bx)}^\mathsf{T} f(\bx) \le 0,\ &\quad& \text{ for all } \bx  \text{ with } |B(\bx)| \le \sqrt{\varepsilon_1},\label{eq:B_constraint_3}
\end{alignat}
\end{subequations}
then for all trajectories $\bx(t)$ of the system 
${\dot{\bx}(t) = f(\bx(t))}$ such that ${\bx(0)\in\mathcal{X}_0}$, one has that ${\bx(t)\not\in\mathcal{X}_u}$ for all ${t \ge 0}$.
\end{thm}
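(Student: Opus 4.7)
The plan is to argue by contradiction, using the barrier function $B$ as a Lyapunov-like certificate for the set $\{\bx : B(\bx) \le 0\}$. First, I would observe that by \eqref{eq:B_constraint_1} we have $B(\bx(0)) \le 0$, and by \eqref{eq:B_constraint_2} membership in $\mathcal{X}_u$ forces $B > 0$; hence it suffices to prove that $B(\bx(t)) \le 0$ for all $t \ge 0$. Note also that, since $f$ is assumed to admit unique solutions and $B$ is continuously differentiable, the composition $t \mapsto B(\bx(t))$ is continuously differentiable on $[0,\infty)$, with derivative $\nabla B(\bx(t))^\mathsf{T} f(\bx(t))$.

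Next, I would suppose for contradiction that there exists $T > 0$ with $B(\bx(T)) > 0$ and define
\begin{equation*}
t_1 := \inf\{t \in [0,T] : B(\bx(t)) > 0\}.
\end{equation*}
Continuity of $B \circ \bx$ together with $B(\bx(0)) \le 0$ forces $B(\bx(t_1)) = 0$, and by definition of the infimum there is a sequence $t_n \searrow t_1$ along which $B(\bx(t_n)) > 0$. This is the step that isolates a ``first crossing'' of the zero level set and sets up the use of \eqref{eq:B_constraint_3}.

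Then I would invoke continuity of $B \circ \bx$ at $t_1$ once more to pick $\delta > 0$ such that $|B(\bx(t))| \le \sqrt{\varepsilon_1}$ on $[t_1, t_1 + \delta]$. On this interval, \eqref{eq:B_constraint_3} applies and yields
\begin{equation*}
\frac{d}{dt} B(\bx(t)) = \nabla B(\bx(t))^\mathsf{T} f(\bx(t)) \le 0.
\end{equation*}
By the fundamental theorem of calculus, $B(\bx(t)) \le B(\bx(t_1)) = 0$ for every $t \in [t_1, t_1 + \delta]$, which contradicts $B(\bx(t_n)) > 0$ for all sufficiently large $n$. This contradiction establishes $B(\bx(t)) \le 0$ for all $t \ge 0$ and hence $\bx(t) \notin \mathcal{X}_u$.

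The main subtlety, and the reason why the statement uses a tube $\{|B| \le \sqrt{\varepsilon_1}\}$ rather than merely the zero level set $B^{-1}(0)$ as in the flawed original formulation flagged in the footnote, is precisely step three: one needs \eqref{eq:B_constraint_3} to hold on an open neighborhood of $t_1$, not just at the single instant $t_1$ where $B(\bx(t_1)) = 0$, so that a genuine monotonicity argument over an interval can be applied. Once the tube condition is assumed, the rest of the proof is a routine continuity-plus-mean-value argument.
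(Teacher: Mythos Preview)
Your proof is correct and follows essentially the same strategy as the paper's: argue by contradiction, locate a crossing of the zero level set, and use the tube condition \eqref{eq:B_constraint_3} in a neighborhood of that crossing to derive a contradiction. The only cosmetic difference is that the paper picks two times $t_1<t_2$ with values in $[-\sqrt{\varepsilon_1},0]$ and $(0,\sqrt{\varepsilon_1}]$ via the intermediate value theorem and then applies the mean value theorem to produce a point with strictly positive derivative, whereas you isolate the first crossing via an infimum and integrate the nonpositive derivative over $[t_1,t_1+\delta]$; both arguments exploit the $\varepsilon_1$-tube in exactly the same way.
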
%
\begin{proof}
   Suppose ${\bx_{0} \in \mathcal{X}_0}$ and there exists $B$ satisfying \eqref{eq:B_constraint}. If the DS is not safe, there exist a trajectory ${\tbx:[0,+\infty)\to\mathbb{R}}$ starting at $\bx_{0}$ and ${t' > 0}$ such that ${B(\tbx(t')) > 0}$ by \eqref{eq:B_constraint_2}. 
   By \eqref{eq:B_constraint_1}, ${B(\tbx(0)) \le 0}$. Since the function ${t \mapsto B(\tbx(t))}$ is continuous, it follows from the intermediate value theorem that there exists $t_1$ with ${0\le t_1 < t'}$  and ${{-\sqrt{\varepsilon_1}} \le B(\tbx(t_1) \le 0}$, and exists $t_2$ with ${t_1 < t_2 \le t'}$ and ${0 < B(\tbx(t_2) \le \sqrt{\varepsilon_1}}$.
   Since the function ${t \mapsto B(\tbx(t))}$ is differentiable, the mean value theorem assures there exists a $t''$ in ${(t_1, t_2)}$ such that 
   ${\frac{d}{dt}[B(\tbx(t)]}$
   evaluated at $t''$ is strictly positive. But this contradicts \eqref{eq:B_constraint_3}.
\end{proof}

A function $B$ satisfying conditions \eqref{eq:B_constraint_1}--\eqref{eq:B_constraint_3}  of Theorem~\ref{thm:barrier_fcn} is called a \textit{barrier certificate} for $f$ with respect to $\mathcal{X}_0$ and $\mathcal{X}_u$.
While this theorem only makes a direct statement regarding safety of trajectories starting in $\mathcal{X}_0$, the safe region of the state space actually amounts to all points $\bx$ where ${B(\bx) \le 0}$ and can be much larger than $\mathcal{X}_0$ (see Fig.~\ref{fig:robot_barrier_detailed}). We define the \emph{certified safe set} %
as ${\mathcal{X}_s \coloneqq \{\bx \in \mathbb{R}^n : B(\bx) \le 0\}}$. By \eqref{eq:B_constraint_2} and \eqref{eq:B_constraint_3}, any trajectory starting in $\mathcal{X}_s$ will never cross the level set $B^{-1}(0)$ and, therefore, never reach $\mathcal{X}_u$.
The reason for taking $\mathcal{X}_0$ as a strict subset of $\mathcal{X}_s$ is to increase the search space of the optimization problem.

While Theorem~\ref{thm:barrier_fcn} is very flexible and gives many options for searching a barrier certificate  $B$, it is desired to restrict $B$ to a set of functions that turns conditions \eqref{eq:B_constraint_1}--\eqref{eq:B_constraint_3} of Theorem~\ref{thm:barrier_fcn} into a computationally tractable optimization problem.
A common approach for achieving this is by the use of sum-of-squares (SOS) optimization~\cite{Prajna2004safety}.%

\begin{figure}[!h]%
    \vspace*{0.36em}%
    \centering%
    \includegraphics[width=1.0\columnwidth]{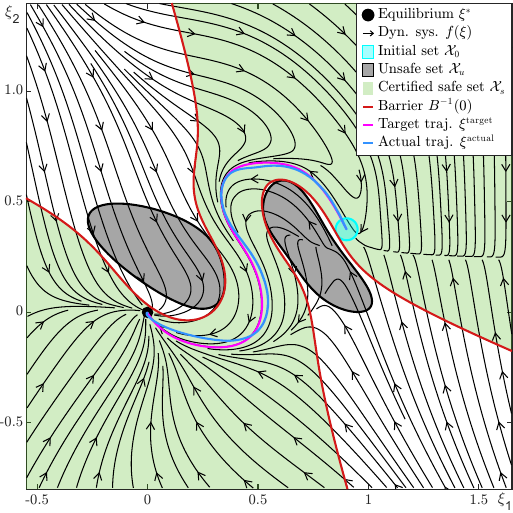}%
    \vspace*{\mySepBetweenFigAndCap}%
    \caption{%
        A dynamical system $f$ is safe if none of its trajectories starting from a state in the initial set $\mathcal{X}_0$ reach a state in the unsafe set $\mathcal{X}_u$. In the particular case of obstacle avoidance, $\mathcal{X}_u$ can be specified to enclose any obstacles.
        The certified safe set $\mathcal{X}_s$ of $f$ with respect to a barrier certificate $B$ is a superset of $\mathcal{X}_0$ and amounts to all states $\bx$ for which ${B(\bx) \le 0}$. When starting at any state in $\mathcal{X}_s$, the trajectory generated by $f$ is guaranteed to not enter $\mathcal{X}_u$.
        The specific DS shown in the figure was generated by our proposed approach, with reference data obtained from a robot. The trajectory generated by the DS when starting in the center of $\mathcal{X}_0$ is plotted in pink. The trajectory executed by the robot when controlled by this DS is shown in blue. %
    }%
    \label{fig:robot_barrier_detailed}%
    \vspace*{\mycaption}%
\end{figure}

\section{Proposed method}

The optimization framework \eqref{eq:DS_opt_problem} offers the flexibility of combining the constraints presented in 
\eqref{eq:DS_opt_problem_constraints_1}--\eqref{eq:DS_opt_problem_constraints_3} and \eqref{eq:B_constraint} into a single optimization problem minimizing the objective \eqref{eq:DS_opt_problem_objective}.
However, the computational feasibility of the resulting optimization problem depends on the $f$ used to represent the DS. 
For instance, using an LPV dynamics such as in 
\eqref{eq:func_parameter_varying_grad_system}
leads to a highly complex optimization problem due to the exponential terms $\gamma_k$ in \eqref{eq:GMM_gamma}. 

Here, we propose to use a polynomial dynamics $f$ for parameterizing DSs to enable the formulation of an optimization problem computationally tractable by sum-of-squares techniques. Simultaneously, this parameterization has the capability to capture complex dynamics: as proved by~\cite{AhmadiKhadir2023LDSwithSideInformation}, trajectories of a continuously differentiable vector field that pass through pre-fixed points can be arbitrarily approximated by trajectories of polynomial vector fields that pass through the same points. Furthermore, experiments presented in~\cite{abyaneh2023learning}, as well as those detailed in Section~\ref{sec:simulations} of our paper,\footnote{%
The results of~\cite{abyaneh2023learning} regarding polynomial DSs as an alternative to LPV-DS were developed concurrently with ours. However, the primary focus of this paper is using polynomial DSs as a way to leverage the use of barrier certificates, while~\cite{abyaneh2023learning} focused on the comparison of polynomial DSs with LPV-DS (without barrier certificates).}
demonstrate that when applied in the context of learning DSs, their performance is on par with LPV-DS.

We show now how to transform Theorem~\ref{thm:barrier_fcn} into a SOS optimization problem.
For that, we assume that the initial set $\mathcal{X}_0$ and the unsafe set $\mathcal{X}_u$ are \emph{basic semialgebraic sets}, that is, they can be described as 
\begin{subequations}
\begin{alignat}{2}
& \mathcal{X}_0 &=& \left\{ \bx \in \mathbb{R}^n : \tilde{g}_1(\bx) \ge 0, \ldots,
\tilde{g}_{p}(\bx) \ge 0 \right\}, \label{eq:initial_set_semialg}\\
& \mathcal{X}_u &=& \left\{ \bx \in \mathbb{R}^n : {g}_1(\bx) \ge 0, \ldots,
{g}_{m}(\bx) \ge 0 \right\}, \label{eq:unsafe_set_semialg}
\end{alignat}
\end{subequations}
where ${\tilde{g}_i\colon \mathbb{R}^n \to \mathbb{R}}$, ${i=1,\ldots,p}$, and ${g_{j}\colon \mathbb{R}^n \to \mathbb{R}}$, ${j=1,\ldots,m}$, are polynomials.
It is important to emphasize that even if the initial and unsafe sets are non-semialgebraic, one can use lightweight algorithms (\eg \cite{dabbene2017simple}) to generate basic semialgebraic sets that overapproximate the non-semialgebraic sets. 
Thus, the SOS optimization in the next proposition can handle initial and unsafe sets with very complicated shapes, possibly non-convex and non-connected. 
\begin{prop}\label{thm:barrier_fcn_SOS}
If there exist a polynomial function ${B\colon \mathbb{R}^n \to \mathbb{R}}$, SOS polynomials\footnote{%
A polynomial is SOS if it can be written as a sum of polynomials, each raised to an even power~\cite{Parrilo2003Semidefiniteprogrammingrelaxations}.} 
${\phi\colon \mathbb{R}^n \to \mathbb{R}}$,
${\tau_{i}\colon \mathbb{R}^n \to \mathbb{R}}$, ${i=1,\ldots,p}$, and ${\sigma_{j}\colon \mathbb{R}^n \to \mathbb{R}}$,  ${j=1,\ldots,m}$, and ${\varepsilon_1, \varepsilon_2 > 0}$ satisfying the following conditions%
\footnote{The terms with ${\varepsilon_2}$ in \eqref{eq:barrier_SOS_conditions} are introduced to improve the numerical stability of the optimization problem and/or to be able to deal with the strict inequalities using SOS.}
\begin{subequations}\label{eq:barrier_SOS_conditions}
\begin{alignat}{2}
&  &&  {-B(\bx)} - \sum_{i=1}^p\nolimits \tau_i(\bx) \, \tilde{g}_i(\bx) \, \text{ is SOS},\label{eq:SOS_B_fcn_cond1}\\
&  && B(\bx) - \varepsilon_2 - \sum_{j=1}^m\nolimits \sigma_j(\bx) \, g_j(\bx) \, \text{ is SOS},\label{eq:SOS_B_fcn_cond2}\\
&  && {-{\nabla{B}(\bx)}^\mathsf{T}} f(\bx)\!-\!\phi(\bx)  (\varepsilon_1\!-\!B^2(\bx))\!-\!\varepsilon_2 \| \bx \|^2 %
    \text{ is SOS},\label{eq:SOS_B_fcn_cond3}
\end{alignat}
\end{subequations}
then $B$ is a barrier certificate for the system $f$ with respect to the initial set $\mathcal{X}_0$ and the unsafe set $\mathcal{X}_u$.
\end{prop}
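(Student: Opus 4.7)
The plan is to reduce Proposition~\ref{thm:barrier_fcn_SOS} to Theorem~\ref{thm:barrier_fcn} by showing that the three SOS conditions \eqref{eq:SOS_B_fcn_cond1}--\eqref{eq:SOS_B_fcn_cond3} respectively imply the three pointwise conditions \eqref{eq:B_constraint_1}--\eqref{eq:B_constraint_3}. The common underlying tool is the elementary observation that an SOS polynomial is nonnegative everywhere, combined with a Positivstellensatz-style multiplier argument: a sum of products of SOS multipliers and nonnegative constraint polynomials is nonnegative on the associated semialgebraic set.

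First, to obtain \eqref{eq:B_constraint_1}, I would fix any ${\bx \in \mathcal{X}_0}$. The SOS hypothesis \eqref{eq:SOS_B_fcn_cond1} gives ${-B(\bx) - \sum_{i=1}^p \tau_i(\bx) \, \tilde{g}_i(\bx) \ge 0}$, while the definition of $\mathcal{X}_0$ in \eqref{eq:initial_set_semialg} ensures ${\tilde{g}_i(\bx) \ge 0}$ and the SOS property of each $\tau_i$ ensures ${\tau_i(\bx) \ge 0}$. Adding the nonnegative quantity ${\sum_i \tau_i(\bx) \tilde{g}_i(\bx)}$ back yields ${B(\bx) \le 0}$. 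The same template, applied to \eqref{eq:SOS_B_fcn_cond2} with the description of $\mathcal{X}_u$ from \eqref{eq:unsafe_set_semialg}, gives ${B(\bx) \ge \varepsilon_2 > 0}$ on $\mathcal{X}_u$, which is strictly stronger than the needed \eqref{eq:B_constraint_2}.

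Next, for \eqref{eq:B_constraint_3}, I would consider an arbitrary $\bx$ satisfying ${|B(\bx)| \le \sqrt{\varepsilon_1}}$, so that ${\varepsilon_1 - B^2(\bx) \ge 0}$. Since $\phi$ is SOS, the product ${\phi(\bx)\,(\varepsilon_1 - B^2(\bx))}$ is nonnegative, and ${\varepsilon_2 \|\bx\|^2 \ge 0}$ as well. The SOS hypothesis \eqref{eq:SOS_B_fcn_cond3} then gives
\begin{equation*}
{-\nabla B(\bx)}^\mathsf{T} f(\bx) \;\ge\; \phi(\bx)\bigl(\varepsilon_1 - B^2(\bx)\bigr) + \varepsilon_2 \|\bx\|^2 \;\ge\; 0,
\end{equation*}
which is exactly the inequality \eqref{eq:B_constraint_3} required by Theorem~\ref{thm:barrier_fcn}.

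Having verified all three conditions of Theorem~\ref{thm:barrier_fcn} (with the same $\varepsilon_1$ as in the hypothesis), I would simply invoke that theorem to conclude that $B$ is a barrier certificate for $f$ with respect to $\mathcal{X}_0$ and $\mathcal{X}_u$. I do not anticipate a genuine obstacle here: the proof is essentially a bookkeeping exercise in the standard SOS-relaxation of semialgebraic constraints. The only subtlety worth flagging explicitly is the role of the slack term ${\varepsilon_2}$, which is what promotes the non-strict SOS inequality into the strict inequality ${B(\bx) > 0}$ on $\mathcal{X}_u$ demanded by \eqref{eq:B_constraint_2}, and analogously lets us treat the open condition on $\dot{B}$; this is the reason the formulation deliberately uses the enlarged set ${\{\bx : |B(\bx)| \le \sqrt{\varepsilon_1}\}}$ in \eqref{eq:B_constraint_3} rather than the bare 0-level set.
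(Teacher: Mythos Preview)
Your proposal is correct and matches the paper's own proof essentially line for line: both verify \eqref{eq:B_constraint_1}--\eqref{eq:B_constraint_3} directly from \eqref{eq:SOS_B_fcn_cond1}--\eqref{eq:SOS_B_fcn_cond3} using nonnegativity of SOS polynomials on the relevant semialgebraic sets, then appeal to Theorem~\ref{thm:barrier_fcn}. Your added commentary on the role of $\varepsilon_2$ and the enlarged set in \eqref{eq:B_constraint_3} is accurate and slightly more explicit than the paper's terse version.
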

\begin{proof}
First, suppose that ${\bx \in \mathcal{X}_0}$. Then ${\tilde{g}_i(\bx) \ge 0}$ and this implies that ${\sum_{i=1}^p \tau_i(\bx) \, \tilde{g}_i(\bx) \ge 0}$.
This last inequality and the non-negativity of \eqref{eq:SOS_B_fcn_cond1} imply that ${B(\bx) \le 0}$.
Thus, ${B(\bx) \le 0}$ for all ${\bx \in \mathcal{X}_0}$ and 
\eqref{eq:B_constraint_1} is satisfied. 
A similar argument can be applied to show that \eqref{eq:SOS_B_fcn_cond2} implies $B{(\bx) \ge \varepsilon_2 > 0}$ for all ${\bx \in \mathcal{X}_u}$, fulfilling \eqref{eq:B_constraint_2}.
Finally, if ${|B(\bx)| \le \sqrt{\varepsilon_1}}$ then \eqref{eq:SOS_B_fcn_cond3} implies that ${{\nabla{B}(\bx)}^\mathsf{T} f(\bx) \le {-\varepsilon_2\,\|\bx\|^2} \le 0}$, satisfying \eqref{eq:B_constraint_3}.
\end{proof}
Based on Proposition~\ref{thm:barrier_fcn_SOS}, we propose \emph{ABC-DS} in Algorithm~\ref{alg:ABC-DS}. %
\begin{figure}[!t]%
\vspace*{-0.71em}%
\begin{algorithm}[H]
\caption{ABC-DS}
\label{alg:ABC-DS}
\noindent
\textbf{Input}: Samples of demonstrated positions $\bx^{(i)}_{\mathrm{ref}}$ and corresponding velocities $\dot{\bx}^{(i)}_{\mathrm{ref}}$ from reference trajectory, basic semi-algebraic initial set \eqref{eq:initial_set_semialg}, unsafe set \eqref{eq:unsafe_set_semialg}, and positive scalars $\varepsilon_1$, $\varepsilon_2$ and $\varepsilon_3$. \\
\textbf{Output}: Dynamics $f$, Lyapunov function $V$, and barrier $B$.\\[0.9em]
Search polynomial functions ${f, V, B \colon \mathbb{R}^n\to\mathbb{R}^n}$ and sum-of-squares polynomials ${\phi, (\tau_{i})_{i=1}^{p},(\sigma_{j})_{j=1}^m \colon \mathbb{R}^n \to \mathbb{R}}$ satisfying:
\begin{subequations}\label{eq:DS_barrier_opt_problem}
\begin{alignat}{2}
&\!\min_{f}  &\quad& {\sum_{i=1}^{N}\nolimits \| \dot{\bx}^{(i)}_{\mathrm{ref}} - f(\bx^{(i)}_{\mathrm{ref}}) \|^2} \label{eq:DS_barrier_opt_problem_objective}\\
&\text{\textup{s.t.}}  &&  V(\bx) - \varepsilon_3 \, \| \bx \|^2 %
    \text{ is SOS},\label{eq:DS_barrier_opt_problem_constraint_Vpos}\\
&  &&  {-{\nabla{V}(\bx)}^\mathsf{T}} f(\bx) - \varepsilon_3 \, \| \bx \|^2 %
    \text{ is SOS},\label{eq:DS_barrier_opt_problem_constraint_VLie}\\
&  &&  {\sum_{i=1}^{N}\nolimits \max\left\{0,B(\bx_{\mathrm{ref}}^{(i)})\right\} \le 0}\label{eq:DS_barrier_opt_problem_constraint_demo},\\
&  && \text{and } \eqref{eq:DS_opt_problem_constraints_1} \text{ and }
\eqref{eq:barrier_SOS_conditions} \text{ hold.} \nonumber
\end{alignat}
\end{subequations}
\vspace{-1.3em}
\end{algorithm}
\vspace*{\mycaption - 0.8em}%
\end{figure}
The constraints \eqref{eq:barrier_SOS_conditions} represent the safety requirements, while the constraints \eqref{eq:DS_opt_problem_constraints_1}, \eqref{eq:DS_barrier_opt_problem_constraint_Vpos} and \eqref{eq:DS_barrier_opt_problem_constraint_VLie} represent the global stability of the origin (for more details, see \cite{papachristodoulou2005tutorial}). 
The sum-of-squares constraints can be easily recast as linear and bilinear\footnote{%
Bilinear matrix inequality constraints can be viewed as bilinear extensions of LMIs. Differently from the latter, the former is a non-convex problem. Nevertheless, there are specialized solvers for solving them \cite{kovcvara2006penbmi}, and many of these problems can be approached by bisection techniques~\cite{fukuda2001branch, cunis2023sequential}. For instance, optimization problems with quadratic objectives and BMI constraints have been solved in~\cite{FigueroaBillard2018PhysicalBayesianDS} and~\cite{FigueroaBillard2022Locallyactiveglobally} to learn DSs without obstacles.} matrix inequalities (LMI and BMI, respectively) by using specialized parsers~\cite{Lofberg2009, sostools, weisser2019polynomial}.
The constraint \eqref{eq:DS_barrier_opt_problem_constraint_demo} is used to exclude local optima where the barrier certificate cuts through the reference trajectories and as such, to bias the solver toward a useful solution.
It is worthy to note that this constraint is linear in the variables of $B$ and, thus, adds only marginal complexity to the problem even for large $N$.

As the degree of the polynomial variables increases, the search space of the optimization problem grows, increasing the chance of finding a solution. 
The price of increasing the degree is that it makes the problem more complex
and, for very high degrees, the resulting problem could be numerically intractable for current solvers. Nevertheless, recent advances in the SOS literature have been enhancing and pushing forward the scalability of SOS-based optimization approaches~\cite{Majumdar2020SOSScalability}.
It is also important to note that employing very high-degree polynomials in sum-of-squares optimization problems makes them more prone to numerical issues such as floating-point errors and ill-conditioned problem formulations, which can influence the trustworthiness of the final numerical results to varying degrees~\cite{roux2018validating}.
Nevertheless, post-analysis procedures can be performed to certify that the DSs generated by the optimization problem indeed satisfy the required properties (see \eg \cite{dai2017barrier, basagiannis2023smt}). In this work, the DSs were verified using the SMT solvers cvc5~\cite{barbosa2022cvc5} and dReal~\cite{gao2013dreal}.

\section{Simulations}\label{sec:simulations}

In this section we illustrate the results of this paper by computing DSs for a representative subset of the LASA Handwriting Dataset\footnote{Available at \href{https://bitbucket.org/khansari/lasahandwritingdataset}{https://bitbucket.org/khansari/lasahandwritingdataset}}~\cite{KhansariZadehBillard2011LearningStableNonlinear}.
We subsample the data to 100 samples per reference trajectory and, without loss of generality, we assume the attractor point to coincide with the origin.
When using polynomials of higher degrees, the spread of coefficients of the decision variables in the optimization problem can reach values in the $10^6$-range and above, which may cause the solver to behave unfavorably.
In order to reduce this spread, we normalize the reference trajectories and corresponding reference velocities.
All the simulations are run with MATLAB R2023a, using the numerical solvers PENBMI~\cite{kovcvara2006penbmi} and MOSEK~\cite{mosek} to solve the optimization problems.
First, we show the competitiveness of ABC-DS in encoding diverse motions with polynomial DSs. Then, we show the capability to produce DSs that are safe in the presence of obstacles but still closely resemble the reference trajectories.

\subsection{Comparison of polynomial DSs with LPV-DS}

Using  ABC-DS
without the barrier-related constraints, \ie without \eqref{eq:barrier_SOS_conditions} and \eqref{eq:DS_barrier_opt_problem_constraint_demo}, we are able to generate polynomial DSs that accurately encode many demonstrations of the LASA dataset, including highly nonlinear ones like \texttt{Leaf2} (see Fig.~\ref{fig:sim_poly_perf}).
All shown results are obtained with the same optimization parameters and solver settings; in particular, $\mathrm{deg}(f)=6$, $\mathrm{deg}(V)=4$.
We also warm-start some variables of the optimization problem of Algorithm~\ref{alg:ABC-DS} with the solution of the convex optimization problem obtained by enforcing $\mathrm{deg}(f)=1$ and $V(\bx)=\bx^\mathsf{T} \bx$.

To quantify the overall similarity of a dynamical system to its underlying reference trajectories, we report the mean squared error (MSE) between the reference velocities $\dot{\bx}^{\mathrm{ref}}$ and the corresponding velocities output by the DS, $f(\bx^{\mathrm{ref}})$.
Fig.~\ref{fig:mse_boxchart} gives a statistical overview of the MSE of our method on the scenarios from Fig.~\ref{fig:sim_poly_perf}.
It also conveys the competitive performance of our method compared to LPV-DS.\footnote{We use the LPV-DS code from \href{https://github.com/nbfigueroa/ds-opt/tree/cc62dbc12f47a2360e057423540ae5ead08d8c39}{https://github.com/nbfigueroa/ds-opt} with default GMM fitting from the provided demo script.}

Nevertheless, one current limitation of our approach is that for certain datasets the solver may have difficulty finding feasible solutions.
However, %
in several cases a more extensive hyperparameter search on the solver settings or the optimization parameters along with heuristics for initializing the variables can solve this issue.

\begin{figure}[!t]%
    \vspace*{\mySepBetweenTopAndFig+1em}%
    \centering%
    \setlength{\fboxsep}{-0.25pt}%
    \setlength{\fboxrule}{0.5pt}%
    \def\tmpwidth{1.058843685114173in-1.33333333333\fboxrule}%
    \subfloat{\label{fig:perf_lasa_angle}\fbox{\scalebarimg{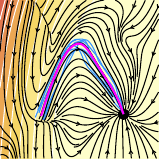}{\tmpwidth}{}{black}}}%
    \hspace{-0.5\fboxrule}\hspace{\fill}\hspace{-0.5pt}\hspace{-0.5\fboxrule}%
    \subfloat{\label{fig:perf_lasa_c}\fbox{\scalebarimg{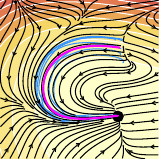}{\tmpwidth}{}{black}}}%
    \hspace{-0.5\fboxrule}\hspace{\fill}\hspace{-0.5pt}\hspace{-0.5\fboxrule}%
    \subfloat{\label{fig:perf_lasa_g}\fbox{\scalebarimg{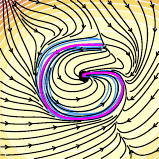}{\tmpwidth}{}{black}}}\\%
    \vspace{-0.7mm}%
    \subfloat{\label{fig:perf_lasa_leaf2}\fbox{\scalebarimg{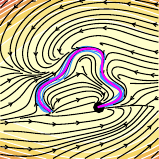}{\tmpwidth}{}{black}}}%
    \hspace{-0.5\fboxrule}\hspace{\fill}\hspace{-0.5pt}\hspace{-0.5\fboxrule}%
    \subfloat{\label{fig:perf_lasa_n}\fbox{\scalebarimg{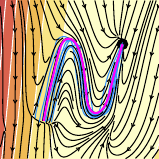}{\tmpwidth}{}{black}}}%
    \hspace{-0.5\fboxrule}\hspace{\fill}\hspace{-0.5pt}\hspace{-0.5\fboxrule}%
    \subfloat{\label{fig:perf_lasa_p}\fbox{\scalebarimg{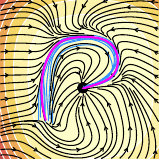}{\tmpwidth}{}{black}}}\\%
    \vspace{-0.7mm}%
    \subfloat{\label{fig:perf_lasa_sine}\fbox{\scalebarimg{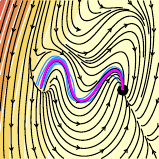}{\tmpwidth}{}{black}}}%
    \hspace{-0.5\fboxrule}\hspace{\fill}\hspace{-0.5pt}\hspace{-0.5\fboxrule}%
    \subfloat{\label{fig:perf_lasa_s}\fbox{\scalebarimg{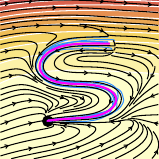}{\tmpwidth}{}{black}}}%
    \hspace{-0.5\fboxrule}\hspace{\fill}\hspace{-0.5pt}\hspace{-0.5\fboxrule}%
    \subfloat{\label{fig:perf_lasa_worm}\fbox{\scalebarimg{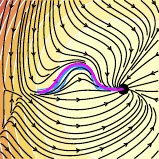}{\tmpwidth}{}{black}}}%
    \vspace*{\mySepBetweenFigAndCap}%
    \caption{%
        Polynomial DS generated by our proposed method (without barrier) on a representative subset of the LASA handwriting dataset. Black stream lines depict the vector field with a globally asymptotically stable equilibrium (black dot). The yellow contours depict the values of the Lyapunov function, where lighter areas indicate lower values. The reference trajectories are shown in blue, and a trajectory starting from the references' mean initial point and simulated by the DS is plotted in pink.%
    }%
    \label{fig:sim_poly_perf}%
    \vspace*{\mycaption}%
\end{figure}

\begin{figure}[!t]
    \vspace*{0.48em}%
    \centering%
    \includegraphics[width=1.0\columnwidth]{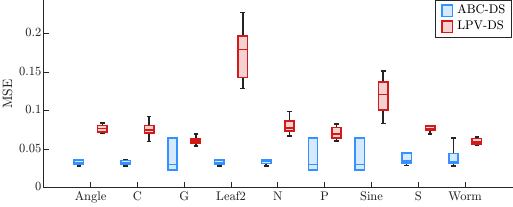}%
    \vspace*{\mySepBetweenFigAndCap}%
    \caption{%
        Mean and standard deviation of the MSE of our polynomial approach (without barrier) versus LPV-DS on a subset of the LASA handwriting dataset. We evaluate each one 10 times with different random seeds for the random number generation, which influences the solver. The reported values are for normalized reference data.%
    }%
    \label{fig:mse_boxchart}%
    \vspace*{\mycaption}%
\end{figure}

\subsection{Encoding demonstrations using ABC-DS}

First, we select four motions from the LASA handwriting dataset and place an obstacle (in the shape of an ellipse) near the reference trajectories in each one of these scenarios. For simplicity, we define any $\mathcal{X}_0$ as a hypersphere large enough to enclose all respective reference trajectories' starting points.
The solutions computed by our approach are plotted in Fig.~\ref{fig:sim_avoidance_roam}. Note that we use slightly different optimization parameters for each scenario, with ${\mathrm{deg}(f) \in \{4, 5\}}$, ${\mathrm{deg}(V) \in \{2, 4\}}$, ${\mathrm{deg}(B) \in \{3, 4\}}$,
and the solver settings differing mostly in the number of iterations.
It can be seen that the trajectories generated by our DS when starting at the reference trajectories' initial points follow the reference trajectories very closely.
At the same time, the computed barrier certificates define a non-conservative safe region: in practice, the certified safe sets (as introduced in Section~\ref{sec:preliminaries}) are not limited to $\mathcal{X}_0$, %
but typically encompass a large region of the state space.

\FloatBarrier

Second, we demonstrate the capability of ABC-DS to work with complex concave obstacles (see Fig.~\ref{fig:abcds_ushape}). Specifically, we select a scenario where the reference trajectories enter the convex hull of such obstacle, because then the concavity cannot simply be mitigated by replacing the obstacle by its convex hull. Therefore, the state-of-the-art method~\cite{huber2022avoiding} cannot be used. Also, ABC-DS generates a DS that reproduces the demonstrations faithfully. It is important to remark that the barrier's 0-level set is fit tightly between demonstrations and obstacle.

\begin{figure}[!t]
    \vspace*{\mySepBetweenTopAndFig}%
    \centering%
    \subfloat{\label{fig:sim_lasa_worm}\scalebarimg{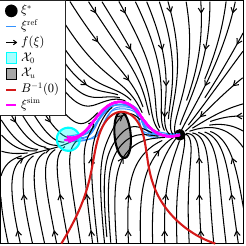}{1.64732064578in}{}{black}}%
    \hspace{\fill}%
    \subfloat{\label{fig:sim_lasa_p}\scalebarimg{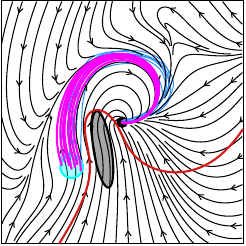}{1.64732064578in}{}{black}}\\%
    \vspace{-0.8mm}%
    \subfloat{\label{fig:sim_lasa_s}\scalebarimg{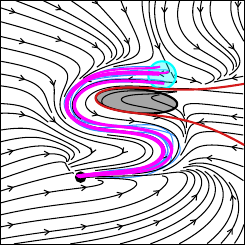}{1.64332064578in}{}{black}}%
    \hspace{\fill}%
    \subfloat{\label{fig:sim_lasa_leaf}\scalebarimg{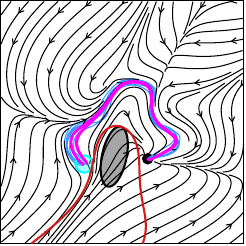}{1.64432064578in}{}{black}}%
    \vspace*{\mySepBetweenFigAndCap}%
    \caption{%
        Performance of ABC-DS on different data from the LASA dataset in the presence of elliptical unsafe sets. The obstacles are safely avoided with non-conservative barrier certificates and the trajectories simulated with the computed DS (black stream lines) are still close to the reference trajectories.%
    }%
    \label{fig:sim_avoidance_roam}%
    \vspace*{\mycaption}%
\end{figure}

\begin{figure}[!t]
    \vspace*{1.05em}%
    \centering%
    \includegraphics[width=1.0\columnwidth]{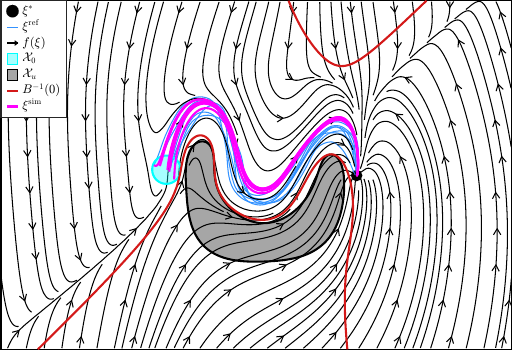}%
    \vspace*{\mySepBetweenFigAndCap}%
    \caption{%
        ABC-DS works with non-star shaped obstacles (black). Here we exemplify the performance in a scenario where the reference trajectories penetrate the convex hull of such obstacle. 
        The solution is obtained %
        by setting ${\mathrm{deg}(f)=5}$, ${\mathrm{deg}(V)=2}$, and ${\mathrm{deg}(B)=4}$.%
    }%
    \label{fig:abcds_ushape}%
    \vspace*{\mycaption - 0.1em}%
\end{figure}

\section{Experiments}\label{sec:experiments}

For this experiment,\footnote{%
Experiments are conducted with a 7-DoF Franka Emika robot~\cite{haddadin2022franka}. 
The robot is controlled using the Franka Control Interface (FCI) at 1 kHz. The control loop runs on a computer (Intel Core i7-10700 @ 2.90GHz) installed with Ubuntu 18.04 LTS and real-time kernel (5.4.138-rt62).
}
the goal is to learn a non-trivial S-shaped motion in the presence of two static box-like obstacles, as illustrated in Figs.~\ref{fig:visual_abstract} and~\ref{fig:ExperimentSetup}. The motion passes through a narrow passage formed by these obstacles.
A video demonstration is available at \href{https://youtu.be/iei-hVL-pfc}{https://youtu.be/iei-hVL-pfc}.

\paragraph*{Recording of reference trajectories and computing the DS}

To obtain the reference trajectories, we use Cartesian impedance control with stiffness in X and Y axis set to 0, while stiffness in the Z axis is set to 500 N/m.
This allows us to move the robot around in the X-Y plane by hand and thereby guide it along the desired path.
Utilizing the joints' position sensors and forward kinematics, we record 14 similar trajectories. For each, the robot is initially positioned randomly within a small circular area.
The trajectories are subsampled to 100 points each.
The obstacles' corner points in the plane are measured by hand. We obtain a basic semialgebraic set description for the unsafe set by first enlarging the polygons---to roughly account for the gripper dimensions---and then fitting a polynomial.
First, we initialize certain variables in the optimization problem presented in Algorithm~\ref{alg:ABC-DS} using the solution derived from the convex optimization problem where we enforce $\mathrm{deg}(f)=1$ and  $V(\bx)=\bx^\mathsf{T} \bx$.
Then, using $\mathrm{deg}(f)=5$, $\mathrm{deg}(V)=2$, $\mathrm{deg}(B)=4$, and $\mathcal{X}_0$ as a hypersphere enclosing all reference trajectories' starting points, our ABC-DS algorithm is able to find a good feasible solution, as plotted in Fig.~\ref{fig:robot_barrier_detailed}.
Similar to scenarios with a single obstacle, the certified safe set $\mathcal{X}_s$ encompasses a non-conservative portion of the state space.

\paragraph*{Control with DS}

To control the robot with the computed DS, we deploy Cartesian impedance control (for X and Y axis) with forward integration. Rudimentary tuning of a few parameters such as the robot impedance is performed.
From Figs.~\ref{fig:robot_barrier_detailed} and~\ref{fig:ExperimentSetup} it can be seen that our DS does indeed keep the robot close to the desired trajectory and also respects the obstacles, thereby demonstrating the effectiveness of ABC-DS in practice.

\begin{figure}[!t]%
    \vspace*{0.48em}%
    \centering%
    \includegraphics[clip, trim=0mm 0.1mm 0.2mm 0mm, width=0.999\columnwidth]{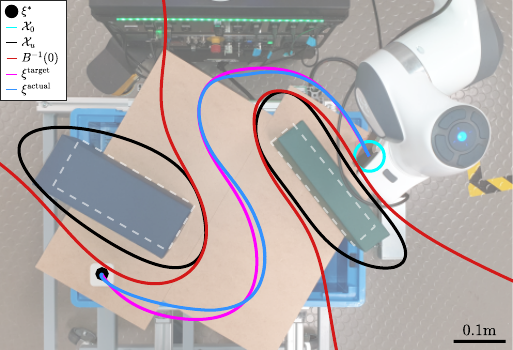}%
    \vspace*{\mySepBetweenFigAndCap}%
    \caption{%
        The robot is shown at a position in the initial set (cyan), about to move along in an S-shaped motion towards the globally asymptotically stable equilibrium (black point). The dark green and dark blue boxes are the obstacles (outlines on the plane are highlighted in white). Overlaid is the derived unsafe set (black). The computed barrier certificate's 0-level set is plotted in red. The pink trajectory is obtained by integrating the DS from the initial position; and blue shows the robot's actual path when controlled by the DS, which closely follows the former.%
    }%
    \label{fig:ExperimentSetup}%
    \vspace*{\mycaption}%
\end{figure}

\paragraph*{Recovery after perturbations}

The final experiment combines compliant behavior, handling of perturbations, obstacle avoidance, and maintaining similarity to a simulated target trajectory.
Here, we deploy passive interaction control~\cite{KronanderBillard2016PassiveInteractionControl}.
We demonstrate that the behavior during perturbations is as desired.
Specifically, when pushed to anywhere in the safe region of the state space, \ie where ${B(\bx) < 0}$, the robot correctly follows the underlying DS which (a) leads it to the globally stable equilibrium, and (b) guides it along paths that never cross the 0-level set of the barrier certificate.

\section{Conclusion}\label{sec:conc}

Our paper introduces ABC-DS, a novel approach that generates stable and safe DSs by solving a SOS optimization problem that simultaneously maximizes the encoding quality of user demonstrations in the DSs while also generating barrier certificates that guarantee obstacle avoidance in certified safe regions of the DSs.
By using basic semialgebraic descriptions for obstacles, ABC-DS is able to deal with non star-shaped obstacles and obstacles with holes. 
In future work, we will investigate applying compositionality results, \eg \cite{Jagtap, dissertation}, to our framework to further increase the scalability of our results.

\balance

\bibliographystyle{ieeetr}
\setlength{\baselineskip}{0pt}

\end{document}